\theoremstyle{definition}
\newtheorem{myprp}{Proposition}
\newtheorem{mythm}{Theorem}
\newcommand{\V}{\ve{V}}
\newcommand{\W}{\ve{W}}
\renewcommand{\H}{\ve{H}}
\def\bal#1{\begin{align}#1\end{align}}
\newcommand{\ve}[1]{ {\mathbf{#1}} }
\def\defequal{\stackrel{\mbox{\footnotesize def}}{=}}
\newcommand{\bbeta}{\boldsymbol{\beta}}
\newcommand{\balpha}{\boldsymbol{\alpha}}
\begin{document}

\twocolumn[
\icmltitle{Closed-form Marginal Likelihood in Gamma-Poisson Matrix Factorization}

\begin{icmlauthorlist}
\icmlauthor{Louis Filstroff}{to}
\icmlauthor{Alberto Lumbreras}{to}
\icmlauthor{Cédric Févotte}{to}
\end{icmlauthorlist}

\icmlaffiliation{to}{IRIT, Université de Toulouse, CNRS, France}

\icmlcorrespondingauthor{Louis Filstroff}{louis.filstroff@irit.fr}

\icmlkeywords{Matrix factorization, Dictionary Learning}

\vskip 0.3in
]

\printAffiliationsAndNotice

\begin{abstract}
We present novel understandings of the Gamma-Poisson (GaP) model, a probabilistic matrix factorization model for count data. We show that GaP can be rewritten free of the score/activation matrix. This gives us new insights about the estimation of the topic/dictionary matrix by maximum marginal likelihood estimation. In particular, this explains the robustness of this estimator to over-specified values of the factorization rank, especially its ability to automatically prune irrelevant dictionary columns, as empirically observed in previous work. The marginalization of the activation matrix leads in turn to a new Monte Carlo Expectation-Maximization algorithm with favorable properties.
\end{abstract}

\section{Introduction}

The Gamma-Poisson (GaP) model is a probabilistic matrix factorization model which was introduced in the field of text information retrieval \citep{canny2004gap, buntine2006discrete}. In this field, a corpus of text documents is typically represented by an integer-valued matrix \textbf{V} of size $F \times N$, where each column $\textbf{v}_n$ represents a document as a so-called ``bag of words''. Given a vocabulary of $F$ words (or in practice semantic stems), the matrix entry $v_{fn}$ is the number of occurrences of word $f$ in the document $n$. GaP is a generative model described by a dictionary of ``topics'' or ``patterns'' $\W$ (a non-negative matrix of size $F\times K$) and a non-negative ``activation'' or ``score'' matrix $\H$ (of size $K \times N$), as follows:
\begin{align}
\H & \sim \prod_{k,n} \text{Gamma}(h_{kn} | \alpha_{k},\beta_{k}) \label{eqn:prior}, \\
\V | \H &\sim \prod_{f,n} \text{Poisson}\left(v_{fn} | [\W \H]_{fn}\right) \label{eqn:obs},
\end{align}
where we use the shape and rate parametrization of the Gamma distribution, i.e.,
$\text{Gamma}(x|\alpha,\beta) = \frac{\beta^{\alpha}}{\Gamma(\alpha)} x^{\alpha-1} e^{-\beta x}$. The dictionary \textbf{W} is treated as a free deterministic variable.

Though this generative model takes its origins in text information retrieval, it has found applications (with variants) in other areas such as image reconstruction \citep{cemgil2009bayesian}, collaborative filtering \citep{ma2011probabilistic, gopalan2015scalable} or audio signal processing \citep{virtanen2008bayesian}.

Denoting $\boldsymbol{\alpha} = [\alpha_1, \ldots, \alpha_K]^{T}$, $\boldsymbol{\beta} = [\beta_1, \ldots, \beta_K]^{T}$, and treating the shape parameters $\alpha_{k}$ as fixed hyperparameters, maximum joint likelihood estimation (MJLE) in GaP amounts to the minimization of
\bal{
C_{\text{JL}}(\textbf{W}, \textbf{H}, \bbeta) & \defequal - \log p(\V,\H | \W, \bbeta) \label{eqn:JL1} \\
& = D_{\text{KL}}(\V|\W \H) + R_{\balpha}(\H,\bbeta) + \text{cst} \label{eqn:JL2}
}
where $D_{\text{KL}}(\cdot | \cdot )$ is the generalized Kullback-Leibler (KL) divergence defined by 
\bal{
D_{\text{KL}}(\textbf{V} | \hat{\textbf{V}}) = \sum_{f,n} \left( v_{fn} \log \left(\frac{v_{fn}}{\hat{v}_{fn}} \right) - v_{fn} + \hat{v}_{fn} \right)
}
and 
\bal{
&R_{\balpha}(\H,\bbeta) = \nonumber \\
&  \sum_{k,n} \left[ (1 - \alpha_k) \log(h_{kn}) + \beta_k h_{kn} \right] - N \sum_{k } \alpha_{k} \log \beta_{k}.
}

Equation~\eqref{eqn:JL2} shows that MJLE is tantamount to penalized KL non-negative matrix factorization (NMF) \citep{lee2000algorithms} and may be addressed using alternating majorization-minimization \citep{canny2004gap, fevotte2011algorithms, dikmen2012maximum}.

As explained in \citet{dikmen2012maximum}, MJLE can be criticized from a statistical point of view. Indeed, the number of estimated parameters grows with the number of samples $N$ (this is because $\H$ has as many columns as $\V$). To overcome this issue, they have instead proposed to consider maximum marginal likelihood estimation (MMLE), in which $\H$ is treated as a latent variable over which the joint likelihood is integrated. In other words, MMLE relies on the minimization of
\begin{align}
C_{\text{ML}}(\textbf{W},\bbeta) & \defequal - \log p(\textbf{V}|\textbf{W},\bbeta)  \\
& = - \log \int_{\textbf{H}} p(\textbf{V}|\textbf{H}, \W, \bbeta) p(\textbf{H}|\bbeta) \mathrm{d}\textbf{H}. \label{eqMMLE}
\end{align}

We emphasize that MMLE treats the dictionary $\W$ as a free deterministic variable. This is in contrast with fully Bayesian approaches where $\W$ is given a prior, and where estimation revolves around the posterior $p(\W,\H|\V)$. For instance, \citet{buntine2006discrete} place a Dirichlet prior on the columns of $\mathbf{W}$, while \citet{cemgil2009bayesian} considers independent Gamma priors. \citet{zhou2012beta}, \citet{zhou2015negative} set a Dirichlet prior on the columns of $\mathbf{W}$ and a Gamma-based non-parametric Bayesian prior on $\mathbf{H}$, which allows for possible rank estimation.

\citet{dikmen2012maximum} assumed that a closed-form expression of $C_{\text{ML}}$ was not available. Besides, they proposed variational and Monte Carlo Expectation-Maximization (MCEM) algorithms based on a complete set formed by $\ve{H}$ and a set of other latent components $\ve{C}$ that will later be defined. In their experiments, they found MMLE to be robust to over-specified values of $K$, while MJLE clearly overfit. This intriguing (and advantageous) behavior was left unexplained. In this paper, we provide the following contributions:
\begin{itemize}
\item We provide a computable closed-form expression of $C_{\text{ML}}$. The expression is tedious to compute for large $F$ and $K$, as it involves combinatorial operations, but is workable for reasonably dimensioned problems.
\item We show that the proposed closed-form expression reveals a penalization term on the columns of $\W$ that explains the ``self-regularization'' effect observed in \citet{dikmen2012maximum}.
\item {We show that the marginalization of $\H$ allows to derive a new MCEM algorithm with favorable properties.}
\end{itemize}

The rest of the paper is organized as follows. Section 2 introduces preliminary material (composite form of GaP, useful probability distributions). In Section 3, we propose two new parameterizations of the GaP model in which $\H$ has been marginalized. This yields a closed-form expression of $C_{\text{ML}}$ which is discussed in Section 4. Finally, a new MCEM algorithm is introduced in Section 5 and is compared to the MCEM algorithms proposed in \citet{dikmen2012maximum} on synthetic and real data.

\section{Preliminaries}

\subsection{Composite structure of GaP}

GaP can be written as a composite model, thanks to the superposition property of the Poisson distribution \citep{fevotte2009nonnegative}:
\begin{align}
h_{kn} & \sim \text{Gamma}(\alpha_k, \beta_k) \label{eqn:gapc1} \\ 
\ve{c}_{kn} | h_{kn} & \sim \prod_{f} \text{Poisson}(c_{fkn} | w_{fk} h_{kn}) \label{eqn:gapc2} \\ 
\ve{v}_{n} & = \sum_k \ve{c}_{kn} \label{eqn:gapc3}
\end{align}

The vectors $\ve{c}_{kn} = [c_{1kn}, \ldots, c_{Fkn}]^{T}$ of size $F$ and which sum up to $\ve{v}_{n}$ are referred to as {\em components}. In the remainder, \textbf{C} will denote the $F \times K \times N$ tensor with coefficients $c_{fkn}$.  

\subsection{Negative Binomial and Negative Multinomial distributions}

In this section, we introduce two probability distributions that will be used later in the article.

\subsubsection{Negative Binomial distribution}

A discrete random variable $X$ is said to have a negative binomial (NB) distribution with parameters $\alpha > 0$ (called the dispersion or shape parameter) and $p \in [0,1]$ if, for all $c \in \mathbb{N}$, the probability mass function (p.m.f.) of $X$ is given by:
\begin{equation}
\mathbb{P}(X = c) = \frac{\Gamma(\alpha+c)}{\Gamma(\alpha) \, c!} (1-p)^{\alpha} p^c.
\end{equation}

Its variance is $\frac{\alpha p}{(1-p)^2}$, which is larger than its mean, $\frac{\alpha p}{1-p}$. It is therefore a suitable distribution to model over-dispersed count data. Indeed, it offers more flexibility than the Poisson distribution where the variance and the mean are equal.

The NB distribution can be obtained via a Gamma-Poisson mixture, that is:
\begin{align}
\mathbb{P}(X = c) & = \int_{\mathbb{R}_{+}} \text{Poisson}(c|\lambda) \text{Gamma}(\lambda|\alpha,\beta) \mathrm{d} \lambda \\
& = \text{NB} \left(\alpha, \frac{1}{\beta+1}\right).
\end{align}

\subsubsection{Negative Multinomial distribution}

The negative multinomial (NM) distribution \citep{sibuya1964negative} is the multivariate generalization of the NB distribution. It is parametrized by a dispersion parameter $\alpha > 0$ and a vector of event probabilities $\textbf{p} = [ p_1, \ldots, p_{F} ]^{T}$, where $0 \leq p_f \leq 1$ and $\sum_f p_f \leq 1$. Denoting $p_0 = 1 - \sum_f p_f$, and for all $(c_{1}, \ldots, c_{F}) \in \mathbb{N}^{F}$, the p.m.f. of the NM distribution is given by
\begin{equation}
\mathbb{P}(X_1 = c_1, \ldots, X_F = c_F) = \frac{\Gamma(\alpha + \sum_f c_f)}{\Gamma(\alpha) \prod_f c_f!} p_0^{\alpha} \prod_f p_f^{c_f},
\end{equation}
with expectation given by
\begin{equation}
\mathbb{E}(X) = \alpha \left[ \frac{p_1}{p_0}, \ldots, \frac{p_F}{p_0} \right]^{T}.
\end{equation}

In particular, the NM distribution arises in the following Gamma-Poisson mixture, as detailed in the next proposition.

\begin{myprp}
Let ${X} = [X_1, \ldots, X_F]^{T}$ be a random vector whose entries are independent Poisson random variables. Assume that each variable $X_f$ is governed by the parameter $w_f\lambda$, where $\lambda$ is itself a Gamma random variable with parameters $(\alpha, \beta)$. Then the joint probability distribution of ${X}$ is a NM distribution with dispersion parameter $\alpha$ and event probabilities
\begin{equation}
\textbf{p} = \left[ \frac{w_1}{\sum_f w_f + \beta}, \ldots, \frac{w_F}{\sum_f w_f + \beta} \right]^{T}.
\end{equation}
\label{prop1}
\end{myprp}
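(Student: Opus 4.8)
The plan is to compute the marginal p.m.f. of $X$ directly by integrating out $\lambda$, and then to match the resulting expression term by term with the NM p.m.f. stated above. Writing $W = \sum_f w_f$ and $C = \sum_f c_f$, I would start from
\[
\mathbb{P}(X_1 = c_1, \ldots, X_F = c_F) = \int_0^{\infty} \left( \prod_f \frac{(w_f \lambda)^{c_f}}{c_f!} e^{-w_f \lambda} \right) \frac{\beta^{\alpha}}{\Gamma(\alpha)} \lambda^{\alpha - 1} e^{-\beta \lambda} \, \mathrm{d}\lambda,
\]
using the conditional independence of the $X_f$ given $\lambda$ together with the Gamma prior density in shape/rate parametrization.

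Next I would pull every factor not depending on $\lambda$ out of the integral, leaving
\[
\frac{\beta^{\alpha} \prod_f w_f^{c_f}}{\Gamma(\alpha) \prod_f c_f!} \int_0^{\infty} \lambda^{\alpha + C - 1} e^{-(W + \beta)\lambda} \, \mathrm{d}\lambda,
\]
and recognize the remaining integral as an unnormalized Gamma integral equal to $\Gamma(\alpha + C)/(W+\beta)^{\alpha + C}$. Substituting this and splitting $(W+\beta)^{\alpha+C} = (W+\beta)^{\alpha}(W+\beta)^{C}$ yields
\[
\mathbb{P}(X_1 = c_1, \ldots, X_F = c_F) = \frac{\Gamma(\alpha + C)}{\Gamma(\alpha) \prod_f c_f!} \left( \frac{\beta}{W+\beta} \right)^{\alpha} \prod_f \left( \frac{w_f}{W+\beta} \right)^{c_f}.
\]

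Finally I would set $p_f = w_f/(W+\beta)$ and verify that the prefactor $\left(\beta/(W+\beta)\right)^{\alpha}$ is exactly $p_0^{\alpha}$, since $1 - \sum_f p_f = 1 - W/(W+\beta) = \beta/(W+\beta)$; this exhibits the above expression as the NM p.m.f. with dispersion $\alpha$ and event probability vector $\mathbf{p}$ as claimed, and one checks $0 \le p_f \le 1$ and $\sum_f p_f \le 1$ along the way. As an optional consistency check I would recover the stated mean either from the NM expectation formula or directly from the tower rule, $\mathbb{E}[X_f] = \mathbb{E}[w_f \lambda] = \alpha w_f/\beta = \alpha p_f/p_0$. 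There is no genuine obstacle here: the only delicate point is the bookkeeping of the normalizing constants, making sure the leftover powers of $(W+\beta)$ regroup correctly into $p_0^{\alpha}\prod_f p_f^{c_f}$; the rest is a standard Gamma integral.
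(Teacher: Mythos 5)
Your proposal is correct and follows essentially the same route as the paper's proof: integrate the product of conditional Poisson p.m.f.s against the Gamma$(\alpha,\beta)$ density, evaluate the resulting Gamma integral as $\Gamma(\alpha+\sum_f c_f)/(\sum_f w_f+\beta)^{\alpha+\sum_f c_f}$, and regroup the powers of $\sum_f w_f+\beta$ to identify the NM p.m.f.\ with $p_f = w_f/(\sum_f w_f+\beta)$ and $p_0 = \beta/(\sum_f w_f+\beta)$. Your extra checks ($0\le p_f\le 1$, $\sum_f p_f\le 1$, and the mean via the tower rule) are fine but not needed beyond what the paper does.
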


\begin{proof} $\mathbb{P}(X = [c_1, \ldots, c_F]^{T})$
\begin{align*}
& = \int_{\mathbb{R}_{+}} \mathbb{P}(X_1 = c_1, \ldots, X_F = c_F|\lambda)p(\lambda) d\lambda \\
& = \int_{\mathbb{R}_{+}} \left( \prod_f \frac{(w_f\lambda)^{c_f} e^{-w_f \lambda}}{c_f!} \right) \frac{\beta^{\alpha}}{\Gamma({\alpha})} \lambda^{\alpha-1}e^{-\beta \lambda} d\lambda \\
& = \left( \prod_f \frac{w_f^{c_f}}{c_f!} \right) \frac{\beta^{\alpha}}{\Gamma({\alpha})} \frac{\Gamma(\alpha + \sum_f c_f)}{\left( \sum_f w_f + \beta\right)^{\alpha + \sum_f c_f}} \\
& = \frac{\Gamma(\alpha + \sum_f c_f)}{\Gamma({\alpha}) \prod_f c_f!} \left( \frac{\beta}{\sum_f w_f + \beta} \right)^{\alpha} \prod_f \left( \frac{w_f}{\sum_f w_f + \beta} \right)^{c_f}
\end{align*}
\end{proof}

The NM distribution can also be obtained with an alternative generative process, as shown in the following proposition.

\begin{myprp}
Let ${Y} = [Y_1, \ldots, Y_F]^{T}$ be a random vector following a multinomial distribution with number of trials $L$ and event probabilities $\textbf{p} = [\frac{w_1}{\sum_f w_f}, \ldots, \frac{w_F}{\sum_f w_f}]^{T}$. Assume that $L$ is a NB random variable with dispersion parameter $\alpha$ and probability $p = \frac{\sum_f w_f}{\sum_f w_f + \beta}$.  Then the random vectors ${Y}$ and ${X}$ (as defined in Proposition~\ref{prop1}) have the same distribution.
\label{prop2}
\end{myprp}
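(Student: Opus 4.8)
The plan is to compute the p.m.f. of $Y$ directly, marginalizing over the random number of trials $L$, and to verify that the result matches the NM p.m.f. derived in the proof of Proposition~\ref{prop1}.

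The key observation is that a multinomial vector with $L$ trials has entries summing exactly to $L$. Hence, given $Y = [c_1, \ldots, c_F]^T$, the only value of $L$ carrying nonzero probability is $L = \sum_f c_f$. Writing $S = \sum_f w_f$ and $c_\bullet = \sum_f c_f$, this yields
\begin{align*}
\mathbb{P}(Y = [c_1, \ldots, c_F]^T) = \mathbb{P}(Y = [c_1, \ldots, c_F]^T \mid L = c_\bullet)\; \mathbb{P}(L = c_\bullet).
\end{align*}
I would then substitute the multinomial p.m.f. $\mathbb{P}(Y = [c_1,\ldots,c_F]^T \mid L = c_\bullet) = \frac{c_\bullet!}{\prod_f c_f!} \prod_f (w_f/S)^{c_f}$ and the NB p.m.f. $\mathbb{P}(L = c_\bullet) = \frac{\Gamma(\alpha + c_\bullet)}{\Gamma(\alpha)\, c_\bullet!}(1-p)^{\alpha} p^{c_\bullet}$, with $p = S/(S+\beta)$ so that $1-p = \beta/(S+\beta)$. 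Multiplying the two, the $c_\bullet!$ factors cancel and the factor $S^{-c_\bullet}$ combines with $p^{c_\bullet}$ to give $(S+\beta)^{-c_\bullet}$, leaving
\begin{align*}
\mathbb{P}(Y = [c_1, \ldots, c_F]^T) = \frac{\Gamma(\alpha + c_\bullet)}{\Gamma(\alpha)\prod_f c_f!}\left(\frac{\beta}{S+\beta}\right)^{\alpha} \prod_f \left(\frac{w_f}{S+\beta}\right)^{c_f},
\end{align*}
which is exactly the NM p.m.f. with dispersion $\alpha$ and event probabilities $[w_1/(S+\beta), \ldots, w_F/(S+\beta)]^T$, i.e., the distribution of $X$ established in Proposition~\ref{prop1}.

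There is essentially no obstacle here: the argument reduces to a short algebraic manipulation. The only mildly delicate point is the degenerate case $c_\bullet = 0$, where necessarily $L = 0$ and $Y = \mathbf{0}$ almost surely; one should check that the formulas reduce consistently in that case (both sides equal $(\beta/(S+\beta))^{\alpha}$). It is worth noting that this is nothing but the classical ``de-Poissonization'' identity — a sum of independent Poisson variables is Poisson, and conditioned on that sum the summands are multinomial — transported through the Gamma mixing that converts Poisson into NB and NM.
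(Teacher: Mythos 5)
Your proof is correct and follows essentially the same route as the paper: condition on the number of trials, multiply the multinomial p.m.f.\ by the NB p.m.f.\ of $L$, and use the fact that $L$ must equal $\sum_f c_f$ to recover the NM p.m.f.\ of Proposition~\ref{prop1}. You simply carry the algebra a step further (explicitly cancelling $c_\bullet!$ and combining $S^{-c_\bullet}$ with $p^{c_\bullet}$) than the paper, which stops after noting $L=\sum_f c_f$.
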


\begin{proof}
$\mathbb{P}({Y} = [c_1, \ldots, c_F]^{T})$
\begin{align*}
& = \mathbb{P}(Y = [c_1, \ldots, c_F]^{T} | L) \times \mathbb{P}(L) \\
& = \frac{L!}{\prod_f c_f!} \prod_f \left( \frac{w_f}{\sum_f w_f} \right)^{c_f} \notag \\ 
& \times \frac{\Gamma(\alpha + L)}{\Gamma(\alpha)L!} \left( \frac{\beta}{\sum_f w_f + \beta} \right)^{\alpha}  \left( \frac{\sum_f w_f}{\sum_f w_f + \beta} \right)^{L} \\
\end{align*}
Noting that $L = \sum_f c_f$ completes the proof.
\end{proof}

\section{New formulations of GaP}

We now show how GaP can be rewritten free of the latent variables \textbf{H} in two different ways.

\subsection{GaP as a composite NM model}

\begin{mythm} GaP can be rewritten as follows:
\begin{align}
\textbf{c}_{kn} & \sim \text{NM} \left( \alpha_k, \left[ \frac{w_{1k}}{\sum_f w_{fk} + \beta_k}, \ldots, \frac{w_{Fk}}{\sum_f w_{fk} + \beta_k} \right]^{T} \right) \label{eqn:gapnm} \\
\textbf{v}_{n} & = \sum_k \textbf{c}_{kn}
\end{align}
\label{thm1}
\end{mythm}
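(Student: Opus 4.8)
The plan is to obtain the statement directly from the composite structure of GaP in \eqref{eqn:gapc1}--\eqref{eqn:gapc3} by marginalizing the activation coefficients $h_{kn}$, invoking Proposition~\ref{prop1}.

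First I would fix a pair $(k,n)$ and isolate the sub-model formed by \eqref{eqn:gapc1}--\eqref{eqn:gapc2} alone, namely $h_{kn} \sim \text{Gamma}(\alpha_k,\beta_k)$ and $\ve{c}_{kn} \mid h_{kn} \sim \prod_f \text{Poisson}(c_{fkn} \mid w_{fk} h_{kn})$. This is exactly the Gamma-Poisson mixture treated in Proposition~\ref{prop1}, under the identifications $\lambda \leftarrow h_{kn}$, $w_f \leftarrow w_{fk}$, $\alpha \leftarrow \alpha_k$, $\beta \leftarrow \beta_k$. Proposition~\ref{prop1} then yields that, after integrating out $h_{kn}$, the marginal law of $\ve{c}_{kn}$ is $\text{NM}(\alpha_k, \ve{p}_k)$ with $\ve{p}_k = [w_{1k}/(\sum_f w_{fk} + \beta_k), \ldots, w_{Fk}/(\sum_f w_{fk} + \beta_k)]^{T}$, which is precisely \eqref{eqn:gapnm}.

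Second I would check that this marginalization can be performed jointly over all $(k,n)$ without coupling the components. Since the prior \eqref{eqn:prior} makes the $h_{kn}$ mutually independent, and \eqref{eqn:gapc2} makes the $\ve{c}_{kn}$ conditionally independent given $\H$ with $\ve{c}_{kn}$ depending on $\H$ only through $h_{kn}$, the joint density factorizes as $p(\ve{C},\H \mid \W,\bbeta) = \prod_{k,n} p(\ve{c}_{kn} \mid h_{kn},\W)\, p(h_{kn} \mid \bbeta)$; integrating each factor separately gives $p(\ve{C} \mid \W,\bbeta) = \prod_{k,n} \text{NM}(\ve{c}_{kn}; \alpha_k, \ve{p}_k)$, so the $\ve{c}_{kn}$ are independent with the announced NM marginals. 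Finally \eqref{eqn:gapc3} provides $\vv_n = \sum_k \ve{c}_{kn}$, the second line of the theorem, and since the composite form \eqref{eqn:gapc1}--\eqref{eqn:gapc3} is equivalent to the original model \eqref{eqn:prior}--\eqref{eqn:obs} by the superposition property of the Poisson distribution, the two formulations describe the same law of $\V$ given $(\W,\bbeta)$.

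There is essentially no serious obstacle here: the content is entirely carried by Proposition~\ref{prop1}. The only point needing a line of care is the factorization argument, i.e.\ verifying that marginalizing the whole tensor $\H$ rather than a single $h_{kn}$ preserves independence of the $\ve{c}_{kn}$; this is immediate from the product forms of \eqref{eqn:prior} and \eqref{eqn:gapc2}.
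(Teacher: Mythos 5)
Your proposal is correct and follows exactly the paper's route: the paper's own proof is the one-line observation that combining the composite form \eqref{eqn:gapc1}--\eqref{eqn:gapc3} with Proposition~\ref{prop1} gives the result. Your additional verification that the product structure of \eqref{eqn:prior} and \eqref{eqn:gapc2} lets the marginalization be done factor by factor, preserving independence of the $\ve{c}_{kn}$, simply makes explicit what the paper leaves implicit.
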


\begin{proof}
Combining Equations~\eqref{eqn:gapc1}-\eqref{eqn:gapc3} with Proposition \ref{prop1} completes the proof.
\end{proof}

GaP may thus be interpreted as a composite model in which the $k^{th}$ component has a NM distribution with parameters governed by $\ve{w}_{k}$ (the $k^{th}$ column of $\W$), $\alpha_{k}$ and $\beta_{k}$. Using straightforward computations, the data expectation can be expressed as
\begin{align}
\mathbb{E}(\textbf{v}_n) & = \sum_{k} \mathbb{E}(\textbf{c}_{kn}) \\
& = \sum_{k} \frac{\alpha_{k}}{\beta_{k}} \ve{w}_{k}.
\end{align}

\subsection{GaP as a composite multinomial model}

\begin{mythm}
GaP can be rewritten as follows:
\begin{align}
L_{kn} & \sim \text{NB}\left(\alpha_k, \frac{\sum_f w_{fk}}{\sum_f w_{fk} + \beta_k}\right) \\
\textbf{c}_{kn} & \sim \text{Mult}\left( L_{kn}, \left[ \frac{w_{1k}}{\sum_f w_{fk}}, \ldots, \frac{w_{Fk}}{\sum_f w_{fk}} \right]^{T} \right) \\
\textbf{v}_{n} & = \sum_k \textbf{c}_{kn}\end{align}
where ``$\text{Mult}$'' refers to the multinomial distribution.
\label{thm2}
\end{mythm}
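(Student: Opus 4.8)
The plan is to chain together the composite negative multinomial representation of Theorem~\ref{thm1} with the alternative generative description of the NM distribution given in Proposition~\ref{prop2}. By Theorem~\ref{thm1}, GaP is equivalent to drawing each component $\mathbf{c}_{kn}$ independently from $\text{NM}(\alpha_k, \mathbf{p}_k)$ with $\mathbf{p}_k = [\,w_{1k}/(\sum_f w_{fk}+\beta_k), \ldots, w_{Fk}/(\sum_f w_{fk}+\beta_k)\,]^{T}$, and then setting $\mathbf{v}_n = \sum_k \mathbf{c}_{kn}$. It therefore suffices to replace the single NM draw of each $\mathbf{c}_{kn}$ by the equivalent hierarchical draw, component by component.

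Concretely, I would invoke Proposition~\ref{prop2} with the substitutions $w_f \leftarrow w_{fk}$, $\beta \leftarrow \beta_k$, $\alpha \leftarrow \alpha_k$. That proposition asserts that a vector built as $Y \mid L \sim \text{Mult}(L, [\,w_{1k}/\sum_f w_{fk}, \ldots, w_{Fk}/\sum_f w_{fk}\,]^{T})$ with $L \sim \text{NB}(\alpha_k, \sum_f w_{fk}/(\sum_f w_{fk}+\beta_k))$ has exactly the law of $X$ in Proposition~\ref{prop1}, i.e.\ $\text{NM}(\alpha_k, \mathbf{p}_k)$ — which is precisely the distribution of $\mathbf{c}_{kn}$ in Theorem~\ref{thm1}. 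Hence, for each pair $(k,n)$ we may generate $L_{kn}$ and $\mathbf{c}_{kn}$ as in the statement, and the resulting $\mathbf{c}_{kn}$ matches the GaP component in distribution.

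Finally, since in the composite representation the components $\mathbf{c}_{kn}$ are mutually independent across $k$ (they arise from the independent Gamma variables $h_{kn}$, hence from independent $L_{kn}$ and independent multinomial draws), summing them over $k$ reproduces $\mathbf{v}_n$ with the correct joint law, which closes the argument. The only point requiring attention — and it is bookkeeping rather than a genuine obstacle — is checking that the three parameter correspondences line up: the NB success probability $\sum_f w_{fk}/(\sum_f w_{fk}+\beta_k)$, the multinomial event probabilities $w_{fk}/\sum_f w_{fk}$, and the induced NM event probabilities $w_{fk}/(\sum_f w_{fk}+\beta_k)$ are exactly those obtained by composing Propositions~\ref{prop1} and~\ref{prop2}, so no computation beyond what already appears in the excerpt is needed. (Alternatively, one could prove the equivalence directly by marginalizing $L_{kn}$ out of the $\text{Mult}$–$\text{NB}$ pair and recognizing the NM p.m.f., but the route through Theorem~\ref{thm1} and Proposition~\ref{prop2} is shorter.)
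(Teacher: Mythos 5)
Your proposal is correct and follows essentially the same route as the paper: the paper's proof simply combines the composite form of GaP in Equations~\eqref{eqn:gapc1}--\eqref{eqn:gapc3} with Proposition~\ref{prop2}, and your chaining of Theorem~\ref{thm1} (itself the composite form plus Proposition~\ref{prop1}) with Proposition~\ref{prop2} is the same argument with one link made explicit. Nothing further is needed.
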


\begin{proof}
Combining Equations~\eqref{eqn:gapc1}-\eqref{eqn:gapc3} with Proposition \ref{prop2} completes the proof.
\end{proof}

Theorem~\ref{thm2} states that another interpretation of GaP consists in modeling the data as a sum of $k$ independent multinomial distributions, governed individually by $\ve{w}_{k}$ and whose number of trials is random, following a NB distribution governed by $\ve{w}_{k}$, $\alpha_{k}$ and $\beta_{k}$.

A special case of the reformulation of GaP offered by Theorem~\ref{thm2} is given by \citet{buntine2006discrete} using a different reasoning, when it is assumed that $\sum_f w_{fk} = 1$ (a common assumption in the field of text information retrieval, where the columns of $\ve{W}$ are interpreted as discrete probability distributions). Theorem~\ref{thm2} provides a more general result as it applies to any non-negative matrix $\ve{W}$.

\section{Closed-form marginal likelihood \label{sec:closed}}

\subsection{Analytical expression}

Until now, it was assumed that the marginal likelihood in the GaP model was not available analytically. However, the new parametrization offered by Theorem \ref{thm1} allows to obtain a computable analytical expression of the marginal likelihood $C_{\text{ML}}$. Denote by $\mathcal{C}$ the set of all ``admissible'' components, i.e.,
\begin{equation}
\mathcal{C} = \{ \textbf{C} \in \mathbb{N}^{F \times K \times N} \, | \, \forall (f, n), \sum\nolimits_k c_{fkn} = v_{fn}\}.
\end{equation}

By marginalization of $\ve{C}$, we may write
\begin{align}
p(\textbf{V}|\textbf{W}, \boldsymbol{\beta}) & = \sum_{\textbf{C} \in \mathcal{C}} p(\textbf{C}|\textbf{W}, \boldsymbol{\beta}) \\
& = \sum_{\textbf{C} \in \mathcal{C}} \prod_{k,n} p(\textbf{c}_{kn}|\textbf{W}, \boldsymbol{\beta}).
\end{align}

Using Equation~\eqref{eqn:gapnm} we obtain
\begin{align}
& p(\textbf{V}|\textbf{W}, \boldsymbol{\beta}) = \notag \\
& \sum_{\textbf{C} \in \mathcal{C}} \prod_{k,n}  \left[ \frac{\Gamma(\sum_f c_{fkn} + \alpha_k)}{\Gamma({\alpha_k}) \prod_f c_{fkn}!} \left( \frac{\beta_k}{\sum_f w_{fk} + \beta_k} \right)^{\alpha_k} \right. \notag \\
& \times \left. \prod_f \left( \frac{w_{fk}}{\sum_f w_{fk} + \beta_{k}} \right)^{c_{fkn}} \right].
\label{eqlkf}
\end{align}

Introducing the notations 
\begin{equation}
\Omega_{\balpha}(\textbf{C}) = \prod_{k,n} \frac{\Gamma(\sum_f c_{fkn} + \alpha_k)}{\Gamma(\alpha_k)\prod_f c_{fkn}!}
\end{equation}
and
\begin{equation}
p_{fk} = \frac{w_{fk}}{\sum_f w_{fk} + \beta_k} \label{eqn:defp}
\end{equation}
we may rewrite Eq.~\eqref{eqlkf} as
\bal{
 p(\textbf{V}|\textbf{W}, \boldsymbol{\beta}) & = \left[ \prod_{k} ( 1-\sum_{f} p_{fk} )^{N\alpha_k} \right] \notag \\
 &\quad  \times  \sum_{\textbf{C} \in \mathcal{C}} \left[ \Omega_{\balpha}(\textbf{C}) \prod_{f,k}  p_{fk} ^{\sum_{n} c_{fkn}} \right]. \label{eqn:ml}
}

Equation~\eqref{eqn:ml} is a computable closed-form expression of the marginal likelihood. It is free of $\H$ and in particular of the integral that appears in Equation~\eqref{eqMMLE}. However the expression~\eqref{eqn:ml} is still semi-explicit because it involves a sum over the set of all admissible components $\ve{C}$. $\mathcal{C}$ is countable set with cardinality $ \# {\cal C} = {\prod_{f,n} \binom{v_{fn}+ K - 1}{K - 1}}$. It is straightforward to construct but challenging to compute in large dimension, and for large values of $v_{fn}$.

The sum over all the matrices in the set $\mathcal{C}$ expresses the convolution of the (discrete) probability distributions of the $K$ components. Unfortunately, the distribution of the sum of independent negative multinomial variables of different event probabilities is not available in closed form.

As already known from \citet{dikmen2012maximum}, the value of the marginal likelihood is unchanged when the scales of the columns of $\W$ and the rates $\bbeta$ are changed accordingly. Let $\boldsymbol{\Lambda}$ be a non-negative diagonal matrix of size $K$, it can easily be derived from Equation~\eqref{eqn:ml} that
\begin{equation}
p(\textbf{V}|\textbf{W} \boldsymbol{\Lambda}, \boldsymbol{\beta}\boldsymbol{\Lambda}) = p(\textbf{V}|\textbf{W}, \boldsymbol{\beta}).
\end{equation}

We therefore have a scaling invariance between $\mathbf{W}$ and $\boldsymbol{\beta}$, and as such, we may fix $\boldsymbol{\beta}$ to  arbitrary values and leave $\textbf{W}$ free. Thus, we will treat $\bbeta$ as a constant in the following and drop it from the arguments of $C_{\text{ML}}$.

\subsection{Self-regularization} 

\citet{dikmen2012maximum} empirically studied the properties of MMLE. In particular, they observed the self-ability of the estimator to regularize the number of columns of $\mathbf{W}$. For example, one experiment consisted in generating synthetic data according to the GaP model, with a ground-truth number of components $K^{\star}$. MMLE was run with $K > K^{\star}$ and they noticed that the estimated $\W$ contained $K - K^{\star}$ empty columns. As such, the estimator was able to recover the ground-truth dimensionality. In contrast, MJLE used all $K$ dimensions and overfit the data. They were unable to give a theoretical justification of the observed phenomenon, but provided a first insight thanks to a Laplace approximation of $p(\textbf{V}|\textbf{W})$. The closed-form expression~\eqref{eqn:ml} offers a deeper understanding of this phenomenon, as explained next.

Using Equations~\eqref{eqn:ml} and \eqref{eqn:defp} and treating $\bbeta$ as a constant, the negative log-likelihood can be expressed as
\begin{align}
& \frac{1}{N} C_{\text{ML}}(\W) = \notag \\
&\quad  - \frac{1}{N} \log \left( \sum_{\textbf{C} \in \mathcal{C}} \Omega_{\balpha}(\textbf{C}) \prod_{f,k} p_{fk}^{\sum_n c_{fkn}} \right)  \label{eqg} \\
&\quad  + \sum_k \alpha_k \log(||\textbf{w}_k||_1 + \beta_k) + \text{cst}, \label{eqg2}
\end{align}
where cst = $-\sum_k \alpha_k \log \beta_k$.

The negative log-likelihood reveals two terms. The first term, Equation~\eqref{eqg}, captures the interaction between data $\V$ (through $\ve{C}$) and the parameter $\W$ (through the event probabilities $p_{fk} = w_{fk}/(\|\ve{w}_{k}\|_{1} + \beta_{k})$). The second term, Equation~\eqref{eqg2}, only depends on the parameter $\W$ and can be interpreted as a group-regularization term. The non-convex and sharply peaked function $f(\ve{x}) = \sum_k \log(x_k + b) $ is known to be sparsity-inducing \citep{candes2008enhancing}. As such, the term~\eqref{eqg2} will promote sparsity of the norms of the columns of $\W$. When a norm $||\textbf{w}_k||_1$ is set to zero for some $k$, the whole column $\textbf{w}_k$ is set to zero because of the non-negativity constraint. This gives a formal explanation of  the ability of MMLE to automatically prune columns of \textbf{W}, without any explicit sparsity-inducing prior at the modeling stage (recall that $\ve{W}$ is a deterministic parameter without a prior).

\section{MCEM algorithms form MMLE}

\subsection{Expectation-Maximization}

We now turn to the problem of optimizing Equation~\eqref{eqMMLE} by leveraging on the results of Section~\ref{sec:closed}. Despite obtaining a closed-form expression, the direct optimization of the marginal likelihood remains difficult. However, the structure of GaP makes Expectation-Maximization (EM) a natural option \citep{dempster1977maximum}. Indeed, GaP involves observed variables $\textbf{V}$ and latent variables \textbf{C} and \textbf{H}. As such, we can derive several EM algorithms based on various choices of the complete set. More precisely, we consider three possible choices that each define a different algorithm, as follows.

{\bf EM-CH.} The complete set is $\{\textbf{C},\textbf{H}\}$ and EM consists in the iterative minimization w.r.t $\W$ of the functional defined by
	\begin{equation}
	Q_{\text{CH}}(\textbf{W}|{\tilde{\textbf{W}}}) = -\int_{\textbf{C}, \textbf{H}} \log p(\textbf{C}, \textbf{H}|\textbf{W}) p(\textbf{C},\textbf{H}|\textbf{V},\tilde{\textbf{W}}) \mathrm{d} \textbf{C} \mathrm{d} \textbf{H},
	\label{Q-CH}
	\end{equation}
where $\tilde{\W}$ is the current estimate. Note that $\ve{V}$ does not need to be included in the complete set because we have $\ve{V} = \sum_{k} \ve{C}_{k}$. This corresponds to the general formulation of EM in which the relation between the complete set and the data is a many-to-one mapping and slightly differs from the more usual one where the complete set is formed by the union of data and a hidden set \citep{dempster1977maximum,fevotte2011efficient}. 

{\bf EM-H.} The complete set is $\{\textbf{V},\textbf{H}\}$ and EM consists in the iterative minimization of
	\begin{equation}
	Q_{\text{H}}(\textbf{W}|{\tilde{\textbf{W}}}) = - \int_{\textbf{H}} \log p(\textbf{V},\textbf{H}|\textbf{W}) p(\textbf{H}|\textbf{V},\tilde{\textbf{W}}) \mathrm{d}\textbf{H}.
	\label{Q-H}
	\end{equation}

{\bf EM-C.} The complete set is merely $\{\textbf{C}\}$ and EM consists in the iterative minimization of 	 
	\begin{equation}
	Q_{\text{C}}(\textbf{W}|{\tilde{\textbf{W}}}) = - \int_{\textbf{C}} \log p(\textbf{C}|\textbf{W}) p(\textbf{C}|\textbf{V},\tilde{\textbf{W}}) \mathrm{d} \textbf{C}.
	\label{Q-C}
	\end{equation}

EM-CH and EM-H have been considered in \citet{dikmen2012maximum}. EM-C is a new proposal that exploits the results of Section~\ref{sec:closed}. In all three cases, the posteriors of the latent variables involved -- $p(\textbf{C},\textbf{H}|\textbf{V},\tilde{\textbf{W}})$, $p(\textbf{H}|\textbf{V},\tilde{\textbf{W}})$ and $p(\textbf{C}|\textbf{V},\tilde{\textbf{W}})$ -- are untractable and neither are the integrals involved in Equations~\eqref{Q-CH}, \eqref{Q-H} and \eqref{Q-C}. To overcome this problem, we resort to Monte Carlo EM (MCEM) \citep{wei1990monte} as described in the next section.

\subsection{Monte Carlo E-step \label{sec:mc-estep}}

MCEM consists in using a Monte Carlo (MC) approximation of the integrals in Equations~\eqref{Q-CH}, \eqref{Q-H} and \eqref{Q-C} based on samples drawn from the posterior distributions $p(\textbf{C},\textbf{H}|\textbf{V},\tilde{\textbf{W}})$, $p(\textbf{H}|\textbf{V},\tilde{\textbf{W}})$ and $p(\textbf{C}|\textbf{V},\tilde{\textbf{W}})$. These can be obtained by Gibbs sampling of the joint posterior $p(\textbf{C},\textbf{H}|\textbf{V},\tilde{\textbf{W}})$, which also returns samples from the marginals $p(\textbf{H}|\textbf{V},\tilde{\textbf{W}})$ and $p(\textbf{C}|\textbf{V},\tilde{\textbf{W}})$ at convergence. The Gibbs sampler can easily be derived because the conditional distributions  $p(\textbf{H}| \textbf{C}, \ve{V}, \tilde{\textbf{W}}) =p(\textbf{H}| \textbf{C},\tilde{\textbf{W}})$ and $p(\textbf{C}|\textbf{H}, \textbf{V},\tilde{\textbf{W}})$ are available in closed form.

At iteration $j+1$, Gibbs sampling writes
\begin{align}
h_{kn}^{(j+1)} &\sim \text{Gamma}(\alpha_k + \sum_f c^{(j)}_{fkn},~\beta_k + \sum_f \tilde{w}_{fk}) \\
\underline{\textbf{c}}_{fn}^{(j+1)}& \sim \text{Mult} \left( v_{fn}, \left[\rho^{(j+1)}_{f1} ,~\ldots, \rho^{(j+1)}_{fK} \right]^{T} \right)
\end{align}
where $\underline{\textbf{c}}_{fn}$ denotes the vector $[c_{f1n}, \ldots, c_{fKn}]^{T}$ of size $K$ and
\begin{equation}
\rho^{(j+1)}_{fk} = \frac{\tilde{w}_{fk}h^{(j+1)}_{kn}}{\sum_{k} \tilde{w}_{fk} h^{(j+1)}_{kn}}.
\end{equation}

Note that $\underline{\textbf{c}}_{fn}^{(j+1)}$ only needs to be sampled when $v_{fn} \neq 0$, since $\underline{\textbf{c}}_{fn}^{(j+1)} = [0, \ldots, 0]^{T}$ when $v_{fn} = 0$.

\subsection{M-step}

Given a set of $J$ samples $\{ \textbf{H}^{(j)}, \textbf{C}^{(j)} \}$ drawn from $p(\textbf{C},\textbf{H}|\textbf{V},\tilde{\textbf{W}})$ (after burn-in), minimization of the MC approximation of $Q_{\text{CH}}$ in Equation~\eqref{Q-CH} yields
\begin{equation}
w^{\text{MCEM-CH}}_{fk} = \frac{\sum_{j,n} c^{(j)}_{fkn}}{\sum_{j,n} h^{(j)}_{kn}},
\end{equation}
as shown by \citet{dikmen2012maximum}. They also show that the following multiplicative update decreases the MC approximation of $Q_{\text{H}}$ in Equation~\eqref{Q-H} at every iteration
\begin{equation}
w^{\text{MCEM-H}}_{fk} = \tilde{w}_{fk} \frac{\sum_{j,n} h^{(j)}_{kn} v_{fn} {[\tilde{\textbf{W}}\textbf{H}^{(j)}]_{fn}^{-1}}}{\sum_{j,n} h^{(j)}_{kn}}.
\end{equation}

We now derive the new update for EM-C. The MC approximation of ${Q}_{\text{C}}$ in Equation~\eqref{Q-C} is given by:
\begin{equation}
 \hat{Q}_{\text{C}}(\textbf{W}|{\tilde{\textbf{W}}}) \defequal - \frac{1}{J} \sum_{j=1}^{J} \log p(\textbf{C}^{(j)}|\textbf{W}).
\end{equation}

Replacing $p(\textbf{C}^{(j)}|\textbf{W})$ by its expression given by Equation~\eqref{eqn:gapnm}, we obtain:
\begin{align}
& \hat{Q}_{\text{C}}(\textbf{W}|{\tilde{\textbf{W}}}) = \frac{1}{J} \sum_{j,k,n} \left[ \alpha_k \log \left( \sum_f w_{fk} + \beta_k \right) \right. \notag \\
& \left. + \sum_{f} c_{fkn}^{(j)} \left(   \log \left(\sum\nolimits_f w_{fk} + \beta_k \right) - \log \left( w_{fk} \right) \right) + \text{cst} \right].
\label{eqqb}
\end{align}

The minimization of $\hat{Q}_{\text{C}}$ w.r.t. \textbf{W} leads to $K$ linear systems of equations that we need to solve for each column $\textbf{w}_k$:
\begin{equation}
\textbf{A}_k \textbf{w}_k = \textbf{b}_k.
\end{equation}

The matrix $\textbf{A}_k \in \mathbb{R}^{F \times F}$ is defined by:
\begin{equation}
a_{fg} = \left( JN \alpha_k + \sum_{j,f,n} c^{(j)}_{fkn} \right) \delta_{fg} - \sum_{j,n} c^{(j)}_{fkn},
\end{equation}
where $\delta_{fg}$ is the Kronecker symbol, i.e. $\delta_{fg} = 1$ if and only if $f = g$, and zero otherwise. The vector $\textbf{b}_k \in \mathbb{R}^{F \times 1}$ is defined by:
\begin{equation}
b_{fk} = \beta_k \sum_{j,n} c^{(j)}_{fkn}.
\end{equation}

The matrix $\ve{A}_k$ appears to be the sum of a diagonal matrix with a rank-1 matrix and can be inverted analytically thanks to the Sherman-Morrison formula \citep{sherman1950adjustment}. This results in the new closed-form update
\begin{equation}
w^{\text{MCEM-C}}_{fk} = \frac{1}{JN} \frac{\beta_k}{\alpha_k}  \sum_{j,n} c^{(j)}_{fkn}.
\end{equation}

In the common case where $\frac{\beta_k}{\alpha_k}$ is equal to a constant $\gamma$ for all $k$, we can write:
\begin{equation}
\sum_k w^{\text{MCEM-C}}_{fk} = \sum_k \frac{\beta_k}{\alpha_k} \frac{\sum_{j,n} c^{(j)}_{fkn}}{JN} = \gamma \frac{\sum_{j,n} v_{fn}}{JN}  = \gamma \overline{v}_f 
\label{eq_cst}
\end{equation}
where $\overline{v}_f = N^{-1} \sum_n v_{fn}$ is the empirical mean of the data for the feature $f$. Equation~\eqref{eq_cst} implies that the rows of the estimate $\W^{\text{MCEM-C}}$ at every iteration sum to a constant. This behavior is illustrated on Figure~\ref{fig1}.

\begin{figure}[t]
	\centering
	\includegraphics[scale=0.475]{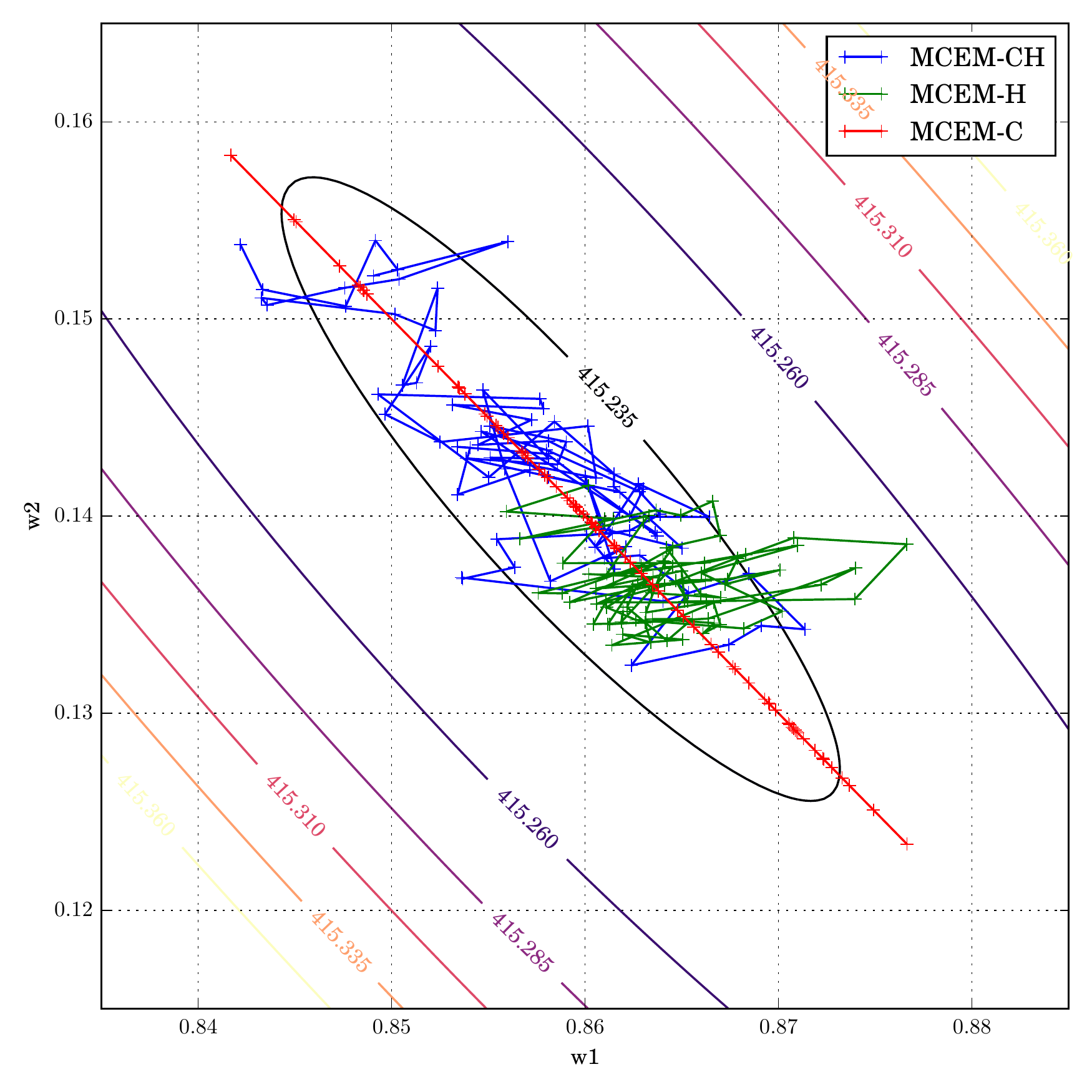}
	\caption{Illustrative example with $F = 1$ and $K =2$. A dataset of $N = 300$ (scalar) samples is generated from GaP. The contour plot displays $C_{\text{ML}}(\ve{W})$ (thanks to its closed-form expression). We set $\alpha_k = 1$ and $\beta_k = 1$. The red, green and blue lines display the iterates of $\ve{W} = [w_1, w_2]$ for the last 80 iterations (out of 500) of MCEM-C, MCEM-H and MCEM-CH, respectively. The iterates generated by MCEM-C satisfy the constraint $w_1 + w_2 = \bar{v}$, as described by Equation~\eqref{eq_cst}.
	\label{fig1}} 
\end{figure}

\section{Experimental results}

We now compare the three MCEM algorithms proposed for MMLE in the GaP model, first using synthetic toy datasets, then real-world data. Python implementations of the three algorithms are available from the first author website.

\subsection{Experiments with synthetic data}

We generate a dataset of $N = 100$ samples according to the GaP model, with the following parameters:
\begin{equation}
\textbf{W}_{1}^{\star} =
\begin{bmatrix} 
0.638 & 0.075 \\
0.009 & 0.568 \\
0.044 & 0.126 \\
0.309 & 0.231 \\
\end{bmatrix}
,~\boldsymbol{\alpha}^{\star} = \boldsymbol{\beta}^{\star} = 1.
\end{equation}
The columns of $\textbf{W}_{1}^{\star}$ have been generated from a Dirichlet distribution (with parameters 1). The generated dataset (of size $4 \times 100$) is denoted by $\textbf{V}_1$.

We proceed to estimate the dictionary $\W$ using hyperparemeters $K = K^\star+1= 3$, $\alpha_k = \beta_k = 1$ with MCEM-C, MCEM-H and MCEM-CH. The algorithms are run for 500 iterations. 300 Gibbs samples are generated at each iteration, with the first 150 samples being discarded for burn-in (this proves to be enough in practice), leading to $J=150$. The Gibbs sampler at EM iteration $i+1$ is initialized with the last sample obtained at EM iteration $i$ (warm restart). The algorithms are initialized from the same deterministic starting point given by 
\begin{equation}
w_{fk} = \frac{1}{K}\frac{\beta_k}{\alpha_k} \overline{v}_f,
\label{eq:i}
\end{equation}
as suggested by Equation~\eqref{eq_cst}.

Figure~\ref{fig2} displays the negative log-likelihood $C_{\text{ML}}(\mathbf{W})$, computed thanks to the derivations of Section~\ref{sec:closed}, and Figure~\ref{fig3}-(a) displays the norm of the three columns of the iterates, both w.r.t. CPU time in seconds. The three algorithms have almost identical computation times, most of the computational burden residing in the Gibbs sampling procedure that is common to the three algorithms. Moreover, the three algorithms converge to the same point, with MCEM-C converging marginally faster than the other two in this case.

\begin{figure}[t]
	\centering
	\includegraphics[scale=0.43]{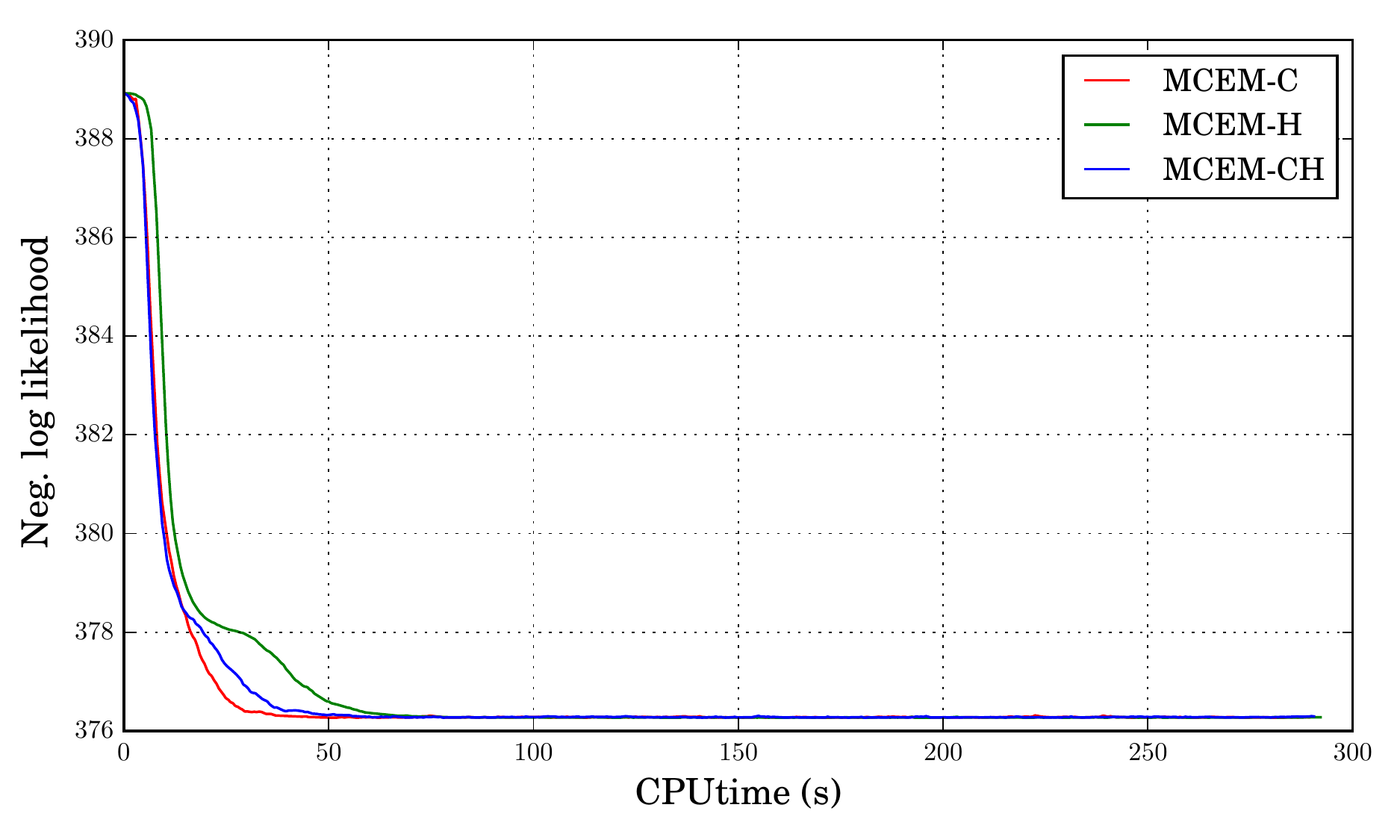}
	\caption{$C_{\text{ML}}(\mathbf{W})$ with respect to CPU time in seconds for the three MCEM algorithms (dataset $\textbf{V}_1$).
	\label{fig2}}
\end{figure}

 Then we proceed to generate a second dataset $\mathbf{V}_2$ according to the GaP model, with now $\textbf{W}_{2}^{\star} = 100 \times \textbf{W}_{1}^{\star}$. The expectation of $\mathbf{V}_2$ is thus a hundred times the expectation of $\mathbf{V}_1$. We apply the exact same experimental protocol to $\mathbf{V}_2$ as we did for $\mathbf{V}_1$, except that the algorithms are now run for a larger number of $2000$ iterations. In this case, because of the combinatorial nature of $\# {\cal C}$, it is impossible to compute the likelihood in reasonable time. The norms of the columns of the iterates are displayed on Figure~\ref{fig3}-(b). As we can see, MCEM-C clearly outperforms the other two algorithms in this scenario. This behavior has been consistently found when estimating dictionaries from datasets with sufficiently large values. This drastic difference in convergence is unexplained at this stage. However, we conjecture that this is linked to the over-dispersion of the data (i.e. when the variance is greater than the mean), which increases with the scale of \textbf{W}. This will be studied in future work.

\begin{figure}[t]
	\begin{center}
		\begin{tabular}{c}
			(a) $\W^{\star}_{1}$ \\
			\includegraphics[scale=0.43]{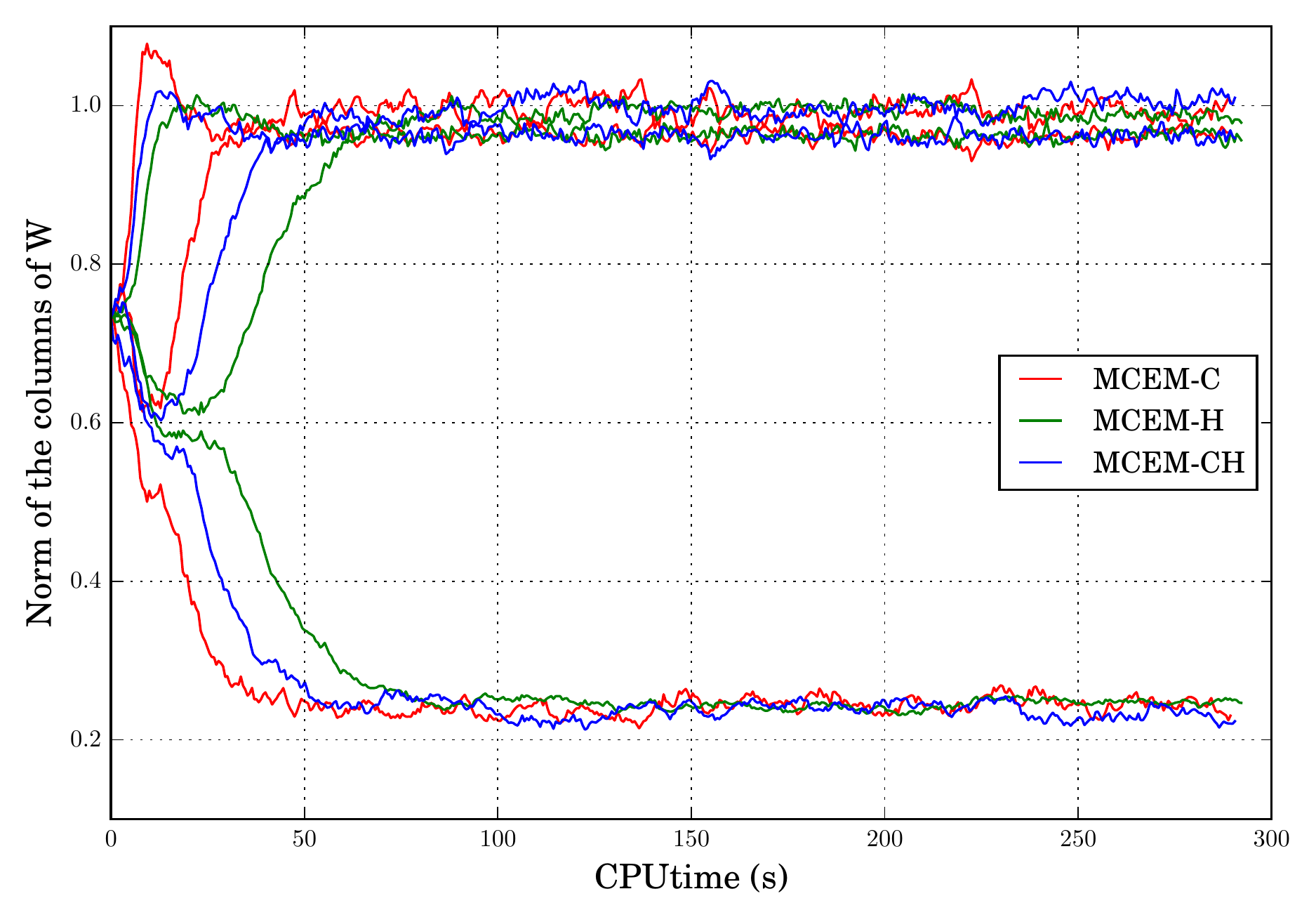} \\
			(b) $\W^{\star}_{2}$ \\	
			\includegraphics[scale=0.43]{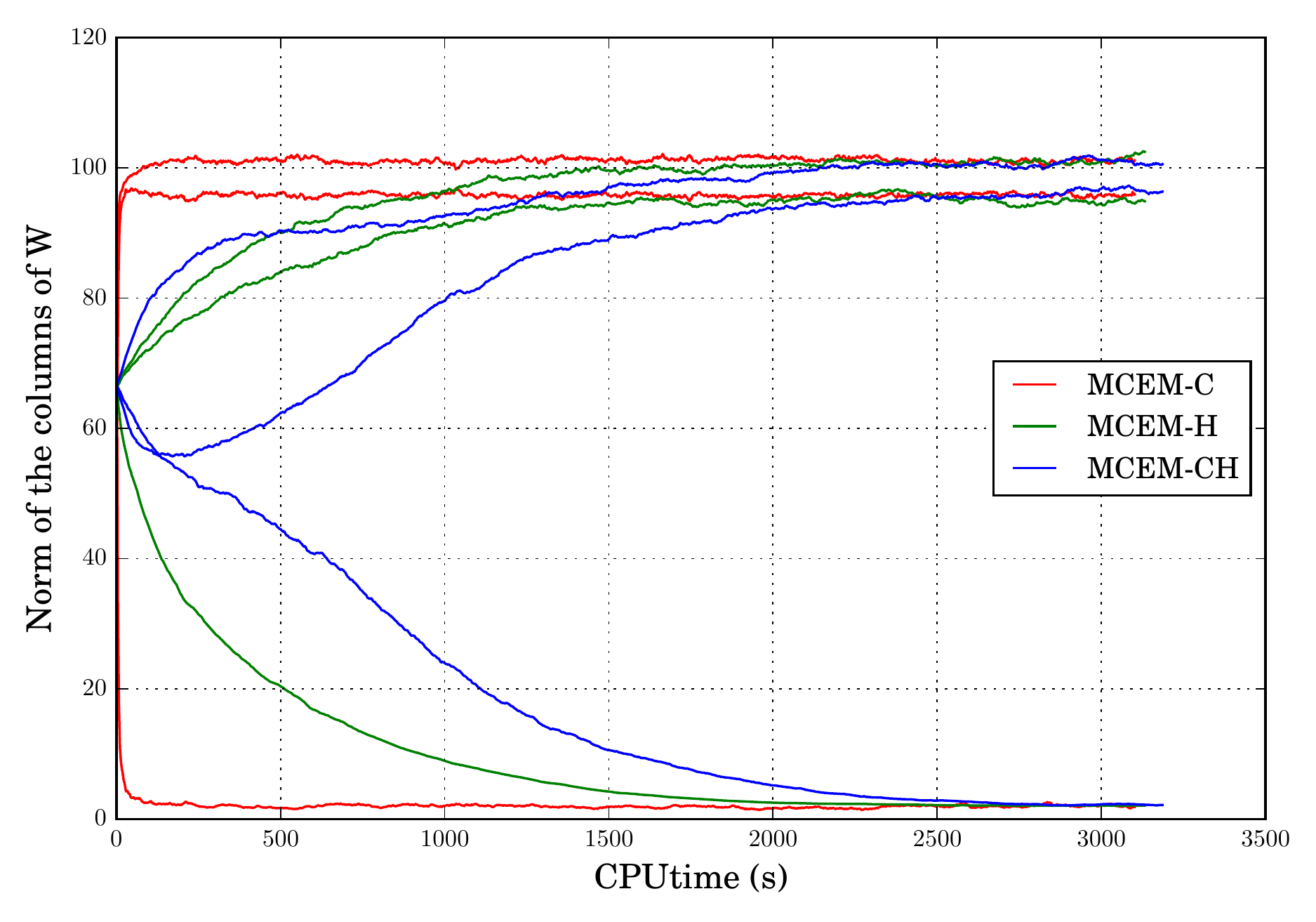}
		\end{tabular}
	\end{center}
	\caption{Evolution of the norm of each of the $K =3$ columns $\textbf{w}_k$ of the dictionaries w.r.t. CPU time for the three MCEM algorithms. Top : dataset $\mathbf{V}_1$, generated according to the GaP model with $\mathbf{W}_1^{\star}$. Bottom : dataset $\mathbf{V}_2$, generated according to the GaP model with $\mathbf{W}_2^{\star}$.
	\label{fig3}}
\end{figure}

\subsection{Experiments with the NIPS dataset}

Finally, we consider the {NIPS} dataset which contains word counts from a collection of articles published at the NIPS conference.\footnote{\url{https://archive.ics.uci.edu/ml/datasets/bag+of+words}} The number of articles is $N=1,500$ and the number of unique terms (appearing at least 10 times after tokenization and removing stop-words) is $F=12,419$. The matrix $\V$ is quite sparse as $96\%$ of its coefficients are zeros. This saves a large amount of computational effort, because we only need to sample $\underline{\mathbf{c}}_{fn}$ for pairs $(f,n)$ such that $v_{fn}$ is non-zero. Moreover, the count values range from 0 to 132.

We applied MCEM-C and MCEM-CH with $K=10$ and $\alpha_k=\beta_k=1$. The algorithms are run for $1,000$ iterations. 250 Gibbs samples are generated in each iteration with the first half being discarded for burn-in (i.e. $J = 125$). The Gibbs sampler at iteration $i+1$ is again initialized with warm restart. The algorithms are initialized using Equation~\eqref{eq:i}. MCEM-H results in similar performance than MCEM-CH and is not reported here.

Figure \ref{fig4} shows that the column norms of the iterates w.r.t. CPU time in seconds. The difference in convergence speed between the two algorithms is again striking. MCEM-C efficiently explores the parameter space in the first iterations and converges dramatically faster than MCEM-CH. The algorithms here converge to different solutions, which confirms the non-convexity of $C_{\text{ML}}(\W)$. Other runs confirmed that MCEM-C is consistently faster, and also that the two algorithms do not always converge to the same solution.

MCEM-C still takes a few hours to converge. We implemented stochastic variants of the EM algorithm such as SAEM \citep{kuhn2004coupling}, however this did not result in significant improvements in our case. Finally, note that large-scale implementations are possible using for example on-line EM \citep{cappe2009line}.

\begin{figure}[t]
	\centering
	\includegraphics[scale=0.43]{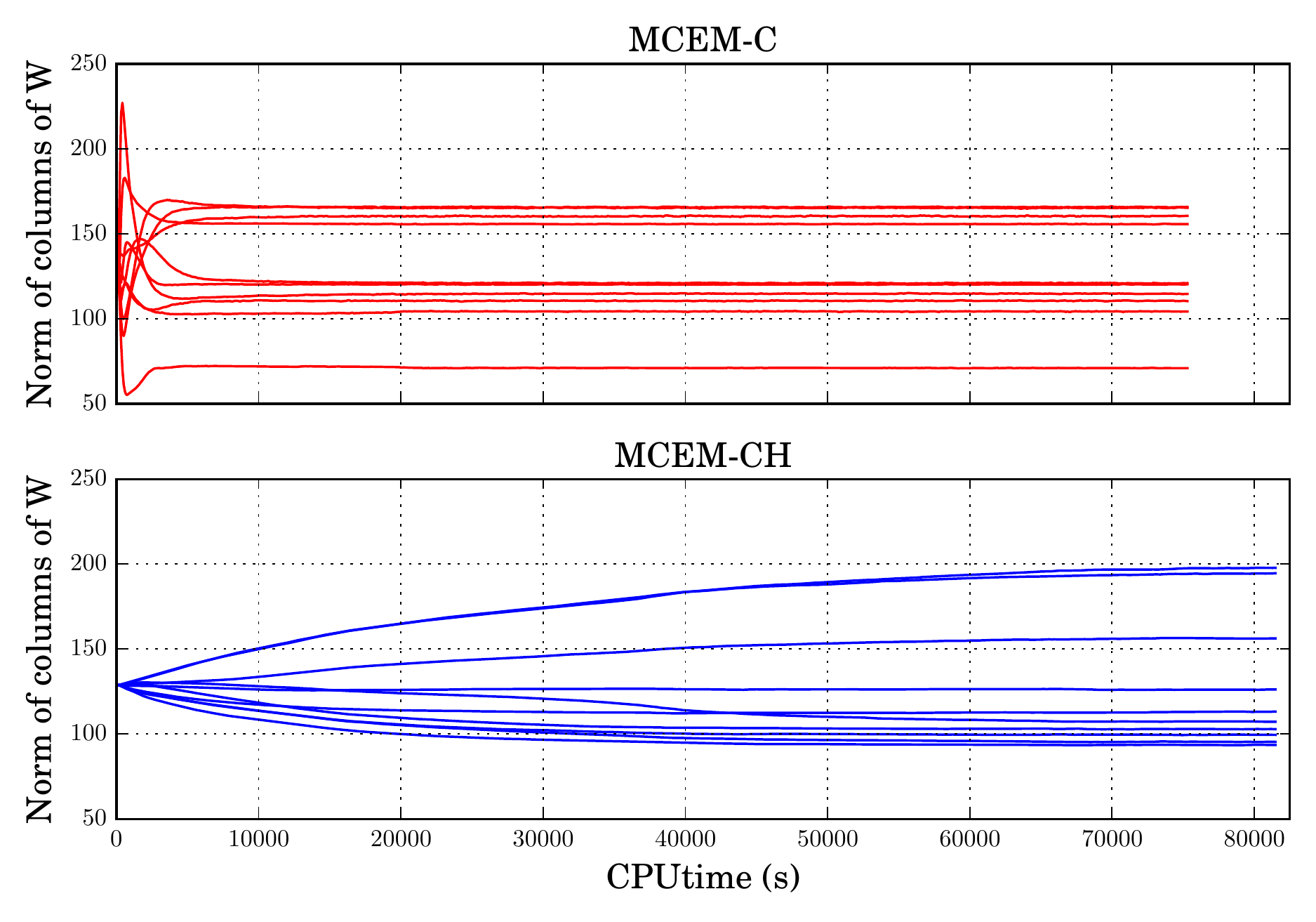}
	\caption{Evolution of the norm of each of the $K =10$ columns of the dictionaries w.r.t. CPU time for MCEM-C and MCEM-CH (NIPS dataset). 
	\label{fig4}}
\end{figure}

\section{Conclusion}

In this paper, we have shown how the Gamma-Poisson model can be rewritten free of the latent variables \textbf{H}. This new parametrization enabled us to come up with a closed-form expression of the marginal likelihood, which revealed a penalization term explaining the ``self-regularization'' effect described in \citet{dikmen2012maximum}. We then proceeded to compare three MCEM algorithms for the task of maximizing the likelihood, and the algorithm taking advantage of the marginalization \textbf{H} has been empirically proven to have better convergence properties.

In this work, we treated $\boldsymbol{\alpha}$ as a fixed hyperparameter. Future work will focus on lifting this hypothesis, and on designing algorithms to estimate both $\textbf{W}$ and $\boldsymbol{\alpha}$. We will also look into carrying a similar analysis in other probabilistic matrix factorization models, such as the Gamma-Exponential model \citep{dikmen2011nonnegative}.

\section*{Acknowledgments}

We thank Benjamin Guedj, V\'ictor Elvira and Jérôme Idier for fruitful discussions related to this work. This work has received funding from the European Research Council (ERC) under the European Union’s Horizon 2020 research and innovation program under grant agreement No 681839 (project FACTORY).

\bibliographystyle{icml2018}
\bibliography{TRBib.bib}

\begin{thebibliography}{21}
\providecommand{\natexlab}[1]{#1}
\providecommand{\url}[1]{\texttt{#1}}
\expandafter\ifx\csname urlstyle\endcsname\relax
  \providecommand{\doi}[1]{doi: #1}\else
  \providecommand{\doi}{doi: \begingroup \urlstyle{rm}\Url}\fi

\bibitem[Buntine \& Jakulin(2006)Buntine and Jakulin]{buntine2006discrete}
Buntine, W. and Jakulin, A.
\newblock Discrete component analysis.
\newblock In \emph{Subspace, Latent Structure and Feature Selection}, pp.\
  1--33. Springer, 2006.

\bibitem[Candès et~al.(2008)Candès, Wakin, and Boyd]{candes2008enhancing}
Candès, E.~J., Wakin, M.~B., and Boyd, S.~P.
\newblock Enhancing sparsity by reweighted l1 minimization.
\newblock \emph{Journal of Fourier analysis and applications}, 14\penalty0
  (5):\penalty0 877--905, 2008.

\bibitem[Canny(2004)]{canny2004gap}
Canny, J.
\newblock \text{GaP}: a factor model for discrete data.
\newblock In \emph{Proceedings of the International ACM SIGIR Conference on
  Research and Development in Information Retrieval}, pp.\  122--129, 2004.

\bibitem[Capp{\'e} \& Moulines(2009)Capp{\'e} and Moulines]{cappe2009line}
Capp{\'e}, O. and Moulines, E.
\newblock On-line expectation--maximization algorithm for latent data models.
\newblock \emph{Journal of the Royal Statistical Society: Series B (Statistical
  Methodology)}, 71\penalty0 (3):\penalty0 593--613, 2009.

\bibitem[Cemgil(2009)]{cemgil2009bayesian}
Cemgil, A.~T.
\newblock Bayesian inference for nonnegative matrix factorisation models.
\newblock \emph{Computational Intelligence and Neuroscience}, \penalty0
  (Article ID 785152), 2009.

\bibitem[Dempster et~al.(1977)Dempster, Laird, and Rubin]{dempster1977maximum}
Dempster, A.~P., Laird, N.~M., and Rubin, D.~B.
\newblock Maximum likelihood from incomplete data via the {EM} algorithm.
\newblock \emph{Journal of the Royal Statistical Society. Series B (Statistical
  Methodology)}, 39\penalty0 (1):\penalty0 1--38, 1977.

\bibitem[Dikmen \& F{\'e}votte(2011)Dikmen and
  F{\'e}votte]{dikmen2011nonnegative}
Dikmen, O. and F{\'e}votte, C.
\newblock Nonnegative dictionary learning in the exponential noise model for
  adaptive music signal representation.
\newblock In \emph{Advances in Neural Information Processing Systems (NIPS)},
  pp.\  2267--2275, 2011.

\bibitem[Dikmen \& F{\'e}votte(2012)Dikmen and F{\'e}votte]{dikmen2012maximum}
Dikmen, O. and F{\'e}votte, C.
\newblock Maximum marginal likelihood estimation for nonnegative dictionary
  learning in the gamma-poisson model.
\newblock \emph{IEEE Transactions on Signal Processing}, 60\penalty0
  (10):\penalty0 5163--5175, 2012.

\bibitem[F{\'e}votte \& Cemgil(2009)F{\'e}votte and
  Cemgil]{fevotte2009nonnegative}
F{\'e}votte, C. and Cemgil, A.~T.
\newblock Nonnegative matrix factorizations as probabilistic inference in
  composite models.
\newblock In \emph{Proceedings of the European Signal Processing Conference
  (EUSIPCO)}, pp.\  1913--1917, 2009.

\bibitem[F{\'e}votte \& Idier(2011)F{\'e}votte and
  Idier]{fevotte2011algorithms}
F{\'e}votte, C. and Idier, J.
\newblock Algorithms for nonnegative matrix factorization with the
  $\beta$-divergence.
\newblock \emph{Neural computation}, 23\penalty0 (9):\penalty0 2421--2456,
  2011.

\bibitem[F\'evotte et~al.(2011)F\'evotte, Capp\'e, and
  Cemgil]{fevotte2011efficient}
F\'evotte, C., Capp\'e, O., and Cemgil, A.~T.
\newblock Efficient {M}arkov chain {M}onte {C}arlo inference in composite
  models with space alternating data augmentation.
\newblock In \emph{Proceedings of the IEEE Workshop on Statistical Signal
  Processing (SSP)}, pp.\  221--224, 2011.

\bibitem[Gopalan et~al.(2015)Gopalan, Hofman, and Blei]{gopalan2015scalable}
Gopalan, P., Hofman, J.~M., and Blei, D.~M.
\newblock Scalable recommendation with hierarchical poisson factorization.
\newblock In \emph{Proceedings of the Conference on Uncertainty in Artificial
  Intelligence (UAI)}, pp.\  326--335, 2015.

\bibitem[Kuhn \& Lavielle(2004)Kuhn and Lavielle]{kuhn2004coupling}
Kuhn, E. and Lavielle, M.
\newblock {Coupling a stochastic approximation version of EM with an MCMC
  procedure}.
\newblock \emph{ESAIM: Probability and Statistics}, 8:\penalty0 115--131, 2004.

\bibitem[Lee \& Seung(2000)Lee and Seung]{lee2000algorithms}
Lee, D.~D. and Seung, H.~S.
\newblock Algorithms for non-negative matrix factorization.
\newblock In \emph{Advances in Neural Information Processing Systems (NIPS)},
  pp.\  556--562, 2000.

\bibitem[Ma et~al.(2011)Ma, Liu, King, and Lyu]{ma2011probabilistic}
Ma, H., Liu, C., King, I., and Lyu, M.~R.
\newblock Probabilistic factor models for web site recommendation.
\newblock In \emph{Proceedings of the International ACM SIGIR Conference on
  Research and Development in Information Retrieval}, pp.\  265--274, 2011.

\bibitem[Sherman \& Morrison(1950)Sherman and Morrison]{sherman1950adjustment}
Sherman, J. and Morrison, W.~J.
\newblock Adjustment of an inverse matrix corresponding to a change in one
  element of a given matrix.
\newblock \emph{The Annals of Mathematical Statistics}, 21\penalty0
  (1):\penalty0 124--127, 1950.

\bibitem[Sibuya et~al.(1964)Sibuya, Yoshimura, and Shimizu]{sibuya1964negative}
Sibuya, M., Yoshimura, I., and Shimizu, R.
\newblock Negative multinomial distribution.
\newblock \emph{Annals of the Institute of Statistical Mathematics},
  16\penalty0 (1):\penalty0 409--426, 1964.

\bibitem[Virtanen et~al.(2008)Virtanen, Cemgil, and
  Godsill]{virtanen2008bayesian}
Virtanen, T., Cemgil, A.~T., and Godsill, S.
\newblock Bayesian extensions to non-negative matrix factorisation for audio
  signal modelling.
\newblock In \emph{Proceedings of the IEEE International Conference on
  Acoustics, Speech and Signal Processing (ICASSP)}, pp.\  1825--1828, 2008.

\bibitem[Wei \& Tanner(1990)Wei and Tanner]{wei1990monte}
Wei, G.~C. and Tanner, M.~A.
\newblock {A Monte Carlo implementation of the EM algorithm and the poor man's
  data augmentation algorithms}.
\newblock \emph{Journal of the American statistical Association}, 85\penalty0
  (411):\penalty0 699--704, 1990.

\bibitem[Zhou \& Carin(2015)Zhou and Carin]{zhou2015negative}
Zhou, M. and Carin, L.
\newblock Negative binomial process count and mixture modeling.
\newblock \emph{IEEE Transactions on Pattern Analysis and Machine
  Intelligence}, 37\penalty0 (2):\penalty0 307--320, 2015.

\bibitem[Zhou et~al.(2012)Zhou, Hannah, Dunson, and Carin]{zhou2012beta}
Zhou, M., Hannah, L., Dunson, D., and Carin, L.
\newblock {Beta-negative binomial process and Poisson factor analysis}.
\newblock In \emph{Proceedings of the International Conference on Artificial
  Intelligence and Statistics (AISTATS)}, pp.\  1462--1471, 2012.

\end{thebibliography}

\end{document}